\newtheorem{thm}{Theorem}
\newtheorem{definition}[thm]{Definition}
\newtheorem{lem}[thm]{Lemma}
\newtheorem{cor}[thm]{Corollary}
\newtheorem{claim}[thm]{Claim}
\newtheorem{fact}[thm]{Fact}
\newtheorem{concl}[thm]{Conclusion}
\newtheorem{subclaim}[thm]{Sub Claim}
\newcommand{\note}[1]{
}
\date{}
\author{Keki M. Burjorjee\\ Zite, Inc.\\487 Bryant St.\\San Francisco, CA, USA\\kekib@cs.brandeis.edu}
\title{The Fundamental Learning Problem that Genetic Algorithms with Uniform Crossover Solve Efficiently and Repeatedly\\ As Evolution Proceeds} 
\begin{document}
\maketitle
\begin{abstract}This paper establishes theoretical bonafides for \emph{implicit concurrent multivariate effect evaluation}---\href{http://blog.hackingevolution.net/2013/03/24/implicit-concurrency-in-genetic-algorithms/}{implicit concurrency}\footnote{\url{http://bit.ly/YtwdST}} for short---a broad and versatile computational learning efficiency thought to underlie general-purpose, non-local, noise-tolerant optimization in genetic algorithms with uniform crossover (UGAs). We demonstrate that implicit concurrency is indeed a form of efficient learning by showing that it can be used to obtain close-to-optimal bounds on the time and queries required to approximately correctly solve a constrained version $(k=7, \eta=1/5)$ of a recognizable computational learning problem: learning parities with noisy membership queries. We argue that a UGA that treats the noisy membership query oracle as a fitness function can be straightforwardly used to approximately correctly learn the essential attributes in $O(\log^{1.585} n)$ queries and $O(n \log^{1.585} n)$ time, where $n$ is the total number of attributes. Our proof relies on an accessible symmetry argument and the use of statistical hypothesis testing to reject a global null hypothesis at the $10^{-100}$ level of significance.  It is, to the best of our knowledge, the first relatively rigorous identification of efficient computational learning in an evolutionary algorithm on a non-trivial learning problem.\end{abstract}

\section{Introduction}
We recently hypothesized \cite{Burjorjee:2013:EOG:2460239.2460244} that an efficient form of computational learning underlies general-purpose, non-local, noise-tolerant optimization in genetic algorithms with uniform crossover (UGAs). The hypothesized computational efficiency, \emph{implicit concurrent multivariate effect evaluation}---\emph{implicit concurrency} for short---is broad and versatile, and carries significant implications for efficient large-scale, general-purpose global optimization in the presence of noise, and, in turn, large-scale machine learning. In this paper, we describe implicit concurrency and explain how it can power general-purpose, non-local, noise-tolerant optimization. We then establish that implicit concurrency is a bonafide form of efficient computational learning by using it to obtain close to optimal bounds on the query and time complexity of an algorithm that solves a constrained version of a problem from the computational learning literature: learning parities with a noisy membership query oracle \cite{UehTsuWeg00, feldman2007attribute}. 

\section{Implicit Concurrenct Multivariate Effect Evaluation}

First, a brief primer on schemata and schema partitions \cite{Mitchell:1996:IGA}: Let $ S=\{0,1\}^n$ be a search space consisting of binary strings of length $ n$ and let $ \mathcal I$ be some set of indices between $ 1$ and $ n$, i.e.  $ \mathcal I\subseteq\{1,\ldots, n\}$. Then  $ \mathcal I$ represents a partition of $ S$ into $ 2^{|\mathcal I|}$ subsets called schemata (singular schema) as in the following example: Suppose $ n=5$, and $ \mathcal I=\{1,2,4\}$, then $ \mathcal I$ partitions $ S$ into eight schemata:
\begin{center}\begin{ttfamily}
\begin{tabular}{c|c|c|c|c|c|c|c}

00*0* & 00*1* & 01*0* & 01*1* & 10*0* & 10*1* & 11*0* & 11*1*\\
\hline
00000 & 00010 & 01000 & 01000 & 01010 & 10000 & 10010 & 11010\\
00001 & 00011 & 01001 & 01001 & 01011 & 10001 & 10011 & 11011\\
00100 & 00110 & 01100 & 01100 & 01110 & 10100 & 10110 & 11110\\
00101 & 00111 & 01101 & 01101 & 01111 & 10101 & 10111 & 11111
\end{tabular}

\end{ttfamily}
\end{center}

\noindent where the symbol $*$ stands for 'wildcard'. Partitions of this type are called schema partitions. As we've already seen, schemata can be expressed using templates, for example,  $ 10*1*$. The same goes for schema partitions. For example $ \#\#*\#*$ denotes the schema partition represented by the index set $ \{1,2,4\}$; the one shown above. Here the symbol $ \#$ stands for 'defined bit'. The \emph{fineness order} of a schema partition is simply the cardinality of the index set that defines the partition, which is equivalent to the number of $\#$ symbols in the schema partition's template (in our running example, the fineness order is 3). Clearly, schema partitions of lower fineness order are coarser than schema partitions of higher fineness order.

We define the \emph{effect} of a schema partition to be the variance\footnote{We use variance because it is a well known measure of dispersion. Other measures of dispersion may well be substituted here without affecting the discussion} of the average fitness values of the constituent schemata under sampling from the uniform distribution over each schema in the partition. So for example, the effect of the schema partition $ \#\#*\#*=\{00*0*\,,\, 00*1*\,,\, 01*0*$, $\, 01*1*, 10*0*\,,\,$ $10*1*\,,\, 11*0*\,,\, 11*1*\}$ is
\[ \frac{1}{8}\sum\limits_{i=0}^1\sum\limits_{j=0}^1\sum\limits_{k=0}^1(F(ij*k*)-F(****))^2\]
\noindent where the operator $F$ gives the average fitness of a schema under sampling from the uniform distribution.   

Let $ \llbracket\mathcal I\rrbracket$ denote the schema partition represented by some index set $ \mathcal I$. Consider the following intuition pump \cite{dennett2013intuition} that illuminates how effects change with the coarseness of schema partitions. For some large $n$, consider a search space $ S=\{0,1\}^n$, and let $ \mathcal I =[n]$. Then $ \llbracket \mathcal I\rrbracket$ is the finest possible partition of $ S$; one where each schema in the partition has just one point. Consider what happens to the effect of $ \llbracket\mathcal I\rrbracket$ as we remove elements from $ \mathcal I$. It is easily seen that the effect of $ \llbracket\mathcal I\rrbracket$ decreases monotonically. Why? Because we are averaging over points that used to be in separate partitions. Secondly, observe that the number of schema partitions of order $ k$ is $ {n \choose k}$. Thus, when $k\ll n$, the number of schema partitions with fineness order $k$ will grow very fast with $k$ (sub-exponentially to be sure, but still very fast). For example, when $n=10^6$, the number of schema partitions of fineness order $2,3,4$ and $5$ are on the order of $ 10^{11}, 10^{17}, 10^{22}$, and $ 10^{28}$ respectively. 

The point of this exercise is to develop the following intuition: when $n$ is large, a search space will have vast numbers of coarse schema partitions, but most of them will have negligible effects due to averaging. In other words, while coarse schema partitions are numerous, ones with non-negligible effects are rare. Implicit concurrent multivariate effect evaluation is a capacity for scaleably learning (with respect to $n$) small numbers of coarse schema partitions with non-negligible effects. It amounts to a capacity for efficiently performing vast numbers of concurrent effect/no-effect multivariate analyses to identify small numbers of interacting loci.

\subsection{Use (and Abuse) of Implicit Concurrency}
Assuming implicit concurrency is possible, how can it be used to power efficient general-purpose, non-local, noise-tolerant optimization? Consider the following heuristic: Use implicit concurrency to identify a coarse schema partition $\llbracket\mathcal I\rrbracket$ with a significant effect. Now limit future search to the schema in this partition with the highest average sampling fitness. Limiting future search in this way amounts to permanently setting the bits at each locus whose index is in $\mathcal I$ to some fixed value, and performing search over the remaining loci. In other words, picking a schema effectively yields a new, lower-dimensional search space. Importantly, coarse schema partitions in the new search space that had negligible effects in the old search space may have detectable effects in the new space. Use implicit concurrency to pick one and limit future search to a schema in this partition with the highest average sampling fitness. Recurse. 

Such a heuristic is \emph{non-local} because it does not make use of neighborhood information of any sort. It is \emph{noise-tolerant} because it is sensitive only to the \emph{average} fitness values of coarse schemata. We claim it is \emph{general-purpose} firstly, because it relies on a very weak assumption about the distribution of fitness over a search space---the existence of \emph{staggered conditional effects} \cite{Burjorjee:2013:EOG:2460239.2460244}; and secondly, because it is an example of a \emph{decimation heuristic}, and as such is in good company. Decimation heuristics such as Survey Propagation \cite{mezard2002analytic,DBLP:conf/uai/KrocSS07} and Belief Propagation \cite{Maneva:2007:NLS:1255443.1255445}, when used in concert with local search heuristics (e.g. \emph{WalkSat} \cite{selman1993local}), are state of the art methods for solving large instances of a number of NP-Hard combinatorial optimization problems close to their solvability/unsolvability thresholds.   

The hyperclimbing hypothesis \cite{Burjorjee:2013:EOG:2460239.2460244} posits that by and large, the heuristic described above \emph{is} the abstract heuristic that UGAs implement---or as is the case more often than not, \emph{misimplement} (it stands to reason that a ``secret sauce" computational efficiency that stays secret will not be harnessed fully). One difference is that in each ``recursive step", a UGA is capable of identifying \emph{multiple} (some small number greater than one) coarse schema partitions with non-negligible effects; another difference is that for each coarse schema partition identified, the UGA does not always pick the schema with the highest average sampling fitness. 

\subsection{Needed: The Scientific Method, Practiced With Rigor}
Unfortunately, several aspects of the description above are not crisp. (How small is small? What constitutes a ``negligible" effect?) Indeed, a formal statement of the above, much less a formal proof, is difficult to provide. Evolutionary Algorithms are typically constructed with biomimicry, not formal analyzability, in mind. This makes it difficult to formally state/prove complexity theoretic results about them without making simplifying assumptions that effectively neuter the algorithm or the fitness function used in the analysis.  

\note{Recombinative EAs are especially hard to analyze}

We have argued previously \cite{Burjorjee:2013:EOG:2460239.2460244} that the adoption of the scientific method \cite{logicofscientificdisc} is a necessary and appropriate response to this hurdle. Science, \emph{rigorously practiced}, is, after all, the foundation of many a useful field of engineering. A hallmark of a rigorous science is the ongoing making and testing of predictions. Predictions found to be true lend credence to the hypotheses that entail them. The more unexpected a prediction (in the absence of the hypothesis), the greater the credence owed the hypothesis if the prediction is validated \cite{logicofscientificdisc,conjrefut}. 

The work that follows validates a prediction that is straightforwardly entailed by the hyperclimbing hypothesis, namely that a UGA that uses a noisy membership query oracle as a fitness function should be able to efficiently solve the learning parities problem for small values of $k$, the number of essential attributes, and non-trivial values of $\eta$, where $0<\eta<1/2$ is the probability that the oracle makes a classification error (returns a 1 instead of a 0, or vice versa). Such a result is completely unexpected in the absence of the hypothesis.

\subsection{Implicit Concurrency $\not =$ Implicit Parallelism}
Given its name and description in terms of concepts from schema theory, implicit concurrency bears a superficial resemblance to \emph{implicit parallelism}, the hypothetical ``engine" presumed, under the beleaguered \emph{building block hypothesis} \cite{Goldberg:1989:GAS, reeves2003gap}, to power optimization in genetic algorithms with strong linkage between genetic loci. The two hypothetical phenomena are emphatically not the same. We preface a comparison between the two with the observation that strictly speaking, implicit concurrency and implicit parallelism pertain to different kinds of genetic algorithms---ones with tight linkage between genetic loci, and ones with no linkage at all. This difference makes these hypothetical engines of optimization non-competing from a scientific perspective. Nevertheless a comparison between the two is instructive for what it reveals about the power of implicit concurrency.  

The \emph{unit of implicit parallel evaluation} is a schema $h$ belonging to a coarse schema partition $\llbracket \mathcal I \rrbracket$ that satisfies the following \emph{adjacency constraint}: the elements of $\mathcal I$ are adjacent, or close to adjacent (e.g. $\mathcal I = \{439, 441,442, 445\}$). The \emph{evaluated characteristic} is the average fitness of samples drawn from $h$, and the \emph{outcome of evaluation} is as follows: the frequency of $h$ rises if its evaluated characteristic is greater than the evaluated characteristics of the other schemata in $\llbracket \mathcal I \rrbracket$. 

The \emph{unit of implicit concurrent evaluation}, on the other hand, is the coarse schema partition $\llbracket \mathcal I \rrbracket$, where the elements of $\mathcal I$ are unconstrained. The \emph{evaluated characteristic} is the effect of $\llbracket \mathcal I \rrbracket$, and the \emph{outcome of evaluation} is as follows: if the effect of $\llbracket \mathcal I \rrbracket$ is non negligible, then a schema in this partition with an above average sampling fitness goes to fixation, i.e. the frequency of this schema in the population goes to 1.

Implicit concurrency derives its superior power viz-a-viz implicit parallelism from the absence of an adjacency constraint. For example, for chromosomes of length $n$, the number of schema partitions with fineness order 7 is ${n\choose 7}\in\Omega(n^7)$ \cite{clr}. The enforcement of the  adjacency constraint, brings this number down to $O(n)$. Schema partition of fineness order 7 contain a constant number of schemata (128 to be exact). Thus for fineness order 7, the number of units of evaluation under implicit parallelism are in $O(n)$, whereas the number of units of evaluation under implicit concurrency are in $O(n^7)$.  
 
\section{The Learning Model}
For any positive integer $n$, let $[n]$ denote the set $\{1,\ldots,n\}$. For any set $K$ such that $|K|<n$ and any binary string $x\in\{0,1\}^n$, let $\pi_K(x)$ denote the string $y\in\{0,1\}^{|K|}$ such that for any $i\in[\,|K|\,], y_i=x_j$ iff $j$ is the $i^{th}$ smallest element of $K$ (i.e. $\pi_K$ strips out bits in $x$ whose indices are not in $K$). An \emph{essential attribute oracle with random classification error} is a tuple $\phi = \langle k,f,n,K, \eta\rangle$ where $n$ and $k$ are positive integers, such that $|K|=k$ and $k< n$, $f$ is a boolean function over $\{0,1\}^k$ (i.e. $f:\{0,1\}^k\rightarrow \{0,1\}$), and $\eta$, the random classification error parameter, obeys  $0<\eta<1/2$. For any input bitstring $x$, 
\[\phi(x) = \left\{ \begin{array}{l}\phantom{\neg}f(\pi_K(x)) \textrm{ with probability } 1-\eta\\ \neg f(\pi_K(x))  \textrm{ with probability } \eta\end{array} \right.
\]
Clearly, the value returned by the oracle depends only on the bits of the attributes whose indices are given by the elements of $K$. These attributes are said to be \emph{essential}. All other attributes are said to be \emph{non-essential}. The \emph{concept space} $\mathcal C$ of the learning model is the set $\{0,1\}^n$ and the \emph{target concept} given the oracle is the element $c^*\in \mathcal C$ such that for any $i\in[n]$, $c^*_i=1 \iff i \in K$. The hypothesis space $\mathcal H$ is the same as the concept space, i.e.  $\mathcal H=\{0,1\}^n$. 

\begin{definition}[Approximately Correct Learning] Given some positive integer $k$, some boolean function $f:\{0,1\}^k\rightarrow\{0,1\}$, and some random classification error parameter $0<\eta<1/2$, we say that the learning problem $\langle k, f, \eta \rangle$ can be \emph{approximately correctly solved} if there exists an algorithm $\mathcal A$ such that for any oracle $\phi=\langle n, k, f, K, \eta \rangle$ and any $0<\epsilon<1/2$, $\mathcal A^\phi(n,\epsilon)$ returns a hypothesis $h\in\mathcal H$ such that  $P(h\not=c^*)<\epsilon$, where $c^*\in\mathcal C$ is the target concept. 
\end{definition}

\section{Our Result and Approach}
Let $\oplus_k$ denote the parity function over $k$ bits. For $k=7$ and $\eta = 1/5$, we give an algorithm that  approximately correctly learns $\langle k\,,\,\oplus_k\,,\, \eta\rangle$ in $O(\log^{1.585}n )$ queries and $O(n \log^{1.585}n)$ time. Our argument relies on the use of hypothesis testing to reject two null hypotheses, each at a Bonferroni adjusted significance level of $10^{-100}/2$. In other words, we rely on a hypothesis  testing based rejection of a global null hypothesis at the $10^{-100}$ level of significance. In layman's terms, our result is based on conclusions that have a 1 in $10^{100}$ chance of being false.

\begin{figure}[t!]\begin{center}\quad\,\,\,\emph{n}\\
\begin{ttfamily}\small
\renewcommand{\arraycolsep}{1pt}
$\,\,\,\,\,\,\,\,\,\overbrace{\phantom{\begin{array}{l*{1}{>{\centering}p{16pt}|} >{\columncolor[gray]{.85}\centering}p{16pt}|*{1}{>{\centering}p{16pt}|}>{\columncolor[gray]{.85}\centering}p{16pt}|*{3}{>{\centering}p{16pt}|}>{\columncolor[gray]{.85}\centering}p{16pt}|*{8}{>{\centering}p{16pt}|}>{\centering}p{16pt}c}
                                                                                                      1&0&1&1&0&1&0&0&1&0&1&0&1&0&0&\ldots&1&0&\\
\end{array}}}$
\vskip -.2cm
$m\left\{\begin{array}{*{2}{>{\centering}p{16pt}|} >{\columncolor[gray]{.85}\centering}p{16pt}|*{1}{>{\centering}p{16pt}|}>{\columncolor[gray]{.85}\centering}p{16pt}|*{3}{>{\centering}p{16pt}|}>{\columncolor[gray]{.85}\centering}p{16pt}|*{8}{>{\centering}p{16pt}|}>{\centering}p{16pt}c}

1&0&1&1&0&1&0&0&1&0&1&0&1&0&0&$\cdots$&1&0&\\
0&0&1&0&1&0&0&1&1&0&1&0&0&1&0&$\cdots$&0&0&\\
1&0&0&1&0&0&1&1&1&0&1&0&1&1&0&$\cdots$&1&0&\\
1&1&0&0&1&0&1&0&0&0&1&1&0&1&0&$\cdots$&0&1&\\
1&0&0&0&0&1&0&1&0&0&1&0&0&1&0&$\cdots$&1&0&\\
0&0&0&1&0&1&0&0&1&1&0&1&0&0&0&$\cdots$&0&0&\\
1&1&0&1&0&0&1&1&1&0&1&1&0&1&0&$\cdots$&1&0&\\

$\vdots$&$\vdots$&$\vdots$&$\vdots$&$\vdots$&$\vdots$&$\vdots$&$\vdots$&$\vdots$&$\vdots$&$\vdots$&$\vdots$&$\vdots$&$\vdots$&$\vdots$&$$\cdots$$&$\vdots$&$\vdots$&\\
$\vdots$&$\vdots$&$\vdots$&$\vdots$&$\vdots$&$\vdots$&$\vdots$&$\vdots$&$\vdots$&$\vdots$&$\vdots$&$\vdots$&$\vdots$&$\vdots$&$\vdots$&$$\cdots$$&$\vdots$&$\vdots$&\\

1&0&0&0&0&1&0&1&0&0&1&0&0&1&0&$\cdots$&1&0&\\
0&0&1&0&1&0&0&1&1&0&1&0&0&1&0&$\cdots$&0&0&\\
\end{array}\right.$

\end{ttfamily}
\end{center}
\caption{\label{hyppop}A hypothetical population of a UGA with a population of size $m$ and bitstring chromosomes of length $n$. The $3^{\textrm{rd}}$, $5^{\textrm{th}}$, and $9^{\textrm{th}}$ loci of the population are shown in grey.}

\end{figure}

While approximately correct learning lends itself to straightforward comparisons with other forms of learning in the computational learning literature, the following, weaker, definition of learning more naturally captures the computation performed by implicit concurrency.   

\begin{definition}[Attributewise $\epsilon$-Approximately Correct Learning] Given some positive integer $k$, some boolean function $f:\{0,1\}^k\rightarrow\{0,1\}$, some random classification error parameter $0<\eta<1/2$, and some $0<\epsilon<1/2$, we say that the learning problem $\langle k, f, \eta \rangle$ can be \emph{attributewise $\epsilon$-approximately correctly solved} if there exists an algorithm $\mathcal A$ such that for any oracle $\phi=\langle n, k, f, K, \eta \rangle$, $\mathcal A^\phi(n)$ returns a hypothesis $h\in\mathcal H$ such that for all $i\in [n]$, $P(h_i\not=c^*_i)<\epsilon$, where $c^*\in\mathcal C$ is the target concept. \end{definition}

Our argument is comprised of two parts. In the first part, we rely on a symmetry argument and the rejection of two null hypotheses, each at a Bonferroni adjusted significance level of $10^{-100}/2$, to conclude that a UGA can attributewise $\epsilon$-approximately correctly solve the learning problem $\langle k=7, f = \oplus_7, \eta =1/5\rangle$ in $O(1)$ queries and $O(n)$ time. 

In the second part (relegated to Appendix A)  we use recursive three-way majority voting to  show that for any $0<\epsilon<\frac{1}{8}$, if some algorithm $\mathcal A$ is capable of attributewise $\epsilon$-approximately correctly solving a learning problem in $O(1)$ queries and $O(n)$ time, then $\mathcal A$ can be used in the construction of an algorithm capable of approximately correctly solving the same learning problem in $O(\log^{1.585}n)$ queries and $O(n \log^{1.585}n)$ time. This part of the argument is entirely formal and does not require knowledge of genetic algorithms or statistical hypothesis testing.  

\section{Symmetry Analysis Based Conclusions}

For any integer $m\in\mathbb Z^+$, let $D_m$ denote the set $\{0,\frac{1}{m}, \frac{2}{m}, \ldots, \frac{m-1}{m},1\}$. Let $V$ be some genetic algorithm with a population of size $m$ of bitstring chromosomes of length $n$. A hypothetical population is shown in Figure \ref{hyppop}.

We define the \emph{1-frequency} of some locus $i\in[n]$ in some generation $\tau$ to be the frequency of the bit 1 at locus $i$ in the population of $V$ in generation $\tau$ (in other words the number of ones in the population of $V$ at locus $i$ in generation $\tau$ divided by $m$, the size of the population). For any $i \in\{1,\ldots,n\}$, let $\mathbf{oneFreq}_{(V,\tau,i)}$ denote the discrete probability distribution over the domain $D_m$ that gives the 1-frequency of locus $i$ in generation $\tau$ of V.\footnote{Note: $\mathbf{oneFreq}_{(V,\tau,i)}$ is an unconditional probability distribution in the sense that for any $i \in\{1,\ldots,n\}$ and any generation $\tau$, if $X_0\sim \mathbf{oneFreq}_{(V,0,i)}, \ldots, X_\tau\sim \mathbf{oneFreq}_{(V,\tau,i)}$ are random variables that give the 1-frequency of locus $i$ in generations $0,\ldots,\tau$ of $V$ in some run. Then  $P(X_\tau\,|\,X_{\tau-1},\ldots,X_{0}) = P(X_{\tau})$}

Let $\phi = \langle n,k,f,K,\eta \rangle$ be an oracle such that the output of the boolean function $f$ is invariant to a reordering of its inputs (i.e., for any $\pi:[n]\rightarrow[n]$ and any element $x\in\{0,1\}$, $f(x_1, \ldots, x_k) = f(x_{\pi(1)}, \ldots, x_{\pi(k)})$) and let $W$ be the genetic algorithm described in algorithm \ref{sgapseudo} with some population size $m$, some mutation rate $p_m$, and  that uses $\phi$ as a fitness function. An appreciation of the algorithmic symmetries in effect (for the purpose at hand, one essential locus is no different from any other essential locus, and a nonessential locus is no different from any other nonessential locus; see Appendix \ref{apB}) yields the following conclusions:
\begin{concl}
 $\forall \tau \in \mathbb Z^+_0, \forall i,j$ such that $i\in K,  j\in K, \mathbf{oneFreq}_{(W, \tau,i)}=\mathbf{oneFreq}_{(W,\tau,j)}$
 \end{concl}
 \begin{concl}
$\forall \tau\in \mathbb Z^+_0, \forall i,j$ such that $i\not\in K, j\not\in K, \mathbf{oneFreq}_{(W,\tau,i)}=\mathbf{oneFreq}_{(W,\tau,j)}$
\end{concl}

For any $i \in K$, $j\not\in K$, and any generation $\tau$ we define $\mathbf{essential}_{(W,\tau)}$ and $\mathbf{nonessential}_{(W,\tau)}$ to be the probability distributions $\mathbf{oneFreq}_{(W,\tau,i)} $ and $\mathbf{oneFreq}_{(W,\tau,j)} $ respectively. Given the conclusions reached above, these distributions are well defined. A further appreciation of the algorithmic symmetries in effect (the location of the $k$ essential loci is immaterial, each non-essential locus is just ``along for the ride" and can be spliced out without affecting the 1-frequency dynamics at other loci) yields the following conclusion:

\begin{concl}
$\forall \tau \in \mathbb Z^+_0$, $\mathbf{essential}_{(W,\tau)}$ and $\mathbf{nonessential}_{(W,\tau)}$ are invariant to $n$ and $K$  
\end{concl} 

\section{Statistical Hypothesis Testing Based Conclusions}
\begin{figure}[p!]\begin{center}
\begin{minipage}{.85\textwidth}
      \begin{algorithm}[H]
\dontprintsemicolon
\SetLine
\KwIn{$m$: population size}
\KwIn{$n$: number of bits in a bitstring chromosome}
\KwIn{$\tau$: number of generations}
\KwIn{$p_m$: probability that a bit will be flipped during mutation}
\textsf{pop} $\leftarrow$ \textsc{rand}($m$,$n$) $<$ 0.5\;
\For{t $\leftarrow$ \textrm{1} to $\tau$}{
\textsf{fitnessVals} $\leftarrow$ \textsc{evaluate-fitness}(\textsf{pop})\;
\For{i $\leftarrow$\textrm{1} to $m$}{
\textsf{totalFitness} $\leftarrow$ \textsf{totalFitness} + \textsf{fitnessVals}[$i$]\;
}
\textsf{cumNormFitnessVals}[1] $\leftarrow$  \textsf{fitnessVals}[$1$]\;
\For{i $\leftarrow$\textrm{2} to $m$}{
\textsf{cumNormFitnessVals}[$i$] $\leftarrow$ \textsf{cumNormFitnessVals}[$i-1$] +\;
\qquad(\textsf{fitnessVals}[$i$]/\textsf{totalFitness})\;
   }
   \For{i $\leftarrow$ 1 to $2m$}{
    \textsf{k} $\leftarrow$ \textsc{rand}()\;
    ctr $\leftarrow$ 1\;
    \While{\textsf{\upshape{k}} $>$ \textsf{\upshape{cumNormFitnessVals}[\textsf{\upshape{ctr}}]}}{
        \textsf{ctr} $\leftarrow$ \textsf{ctr} + 1
}
\textsf{parentIndices}[$i$] $\leftarrow$ ctr
}
   \textsf{crossOverMasks} $\leftarrow$ \textsc{rand}($m, n$) $<$ 0.5\;
       \For{i $\leftarrow$ 1 to $m$}{
            \For{j $\leftarrow$ 1 to $n$}{
                 \eIf{\textsf{\upshape{crossMasks}[$i$,$j$]}$=1$}{
                \textsf{newPop}[$i,j$]$\leftarrow$ \textsf{pop[parentIndices[$i$],$j$]}\;
}{
\textsf{newPop}[$i,j$]$\leftarrow$ \textsf{pop[parentIndices[$i+m$],$j$]}\;
                     }
}
       }
       \textsf{mutationMasks} $\leftarrow$ \textsc{rand}$(m, n) < p_m$\;
       \For{i $\leftarrow$ \textrm{1} to $m$}{
            \For{j $\leftarrow$ \textrm{1} to $n$}{
                \textsf{newPop}[$i$,$j$]$\leftarrow$ \textsc{xor}(\textsf{newPop}[$i,j$], \textsf{mutMasks}[$i,j$])
            }
       }
       \textsf{pop}$\leftarrow$\textsf{newPop}
   }

\caption[boo]{\label{sgapseudo}Pseudocode for a simple genetic algorithm with uniform crossover. The population is stored in an $m$ by $n$ array of bits, with each row representing a single chromosome. \textsc{Rand()} returns a number drawn uniformly at random from the interval [0,1] and \textsc{rand}$(a,b)<c$ denotes an $a$ by $b$ array of bits such that for each bit, the probability that it is 1 is $c$.}
\normalsize
\end{algorithm}
\end{minipage}
\end{center}
\end{figure}

\begin{fact} \label{lkfjg}Let $D$ be some discrete set. For any subset $S\subseteq D$, and any independent and identically distributed random variables $X_1, \ldots, X_N$, the following statements are equivalent: 
\begin{enumerate}
\item $\forall i \in [N], P(X_i \in D\backslash S)\geq\epsilon$
\item $\forall i \in [N], P(X_i \in S)<1-\epsilon$ 
\item $P(X_1\in S \wedge \ldots \wedge X_N \in S) < (1-\epsilon)^N$
\end{enumerate}
\end{fact}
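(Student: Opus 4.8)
The plan is to collapse all three statements onto a single scalar inequality in the common marginal probability $p := P(X_1 \in S)$, using the two halves of the i.i.d.\ hypothesis separately. First I would invoke identical distribution: since $X_1,\ldots,X_N$ share a distribution, $P(X_i\in S)=p$ for every $i$, so the universally quantified statements (1) and (2) each reduce to a single condition that is independent of $i$ and the ``$\forall i$'' quantifier is vacuous. Assuming, as the events $\{X_i\in S\}$ and $\{X_i\in D\backslash S\}$ tacitly require, that each $X_i$ takes values in $D$, the sets $S$ and $D\backslash S$ partition the range of $X_i$, whence $P(X_i\in D\backslash S)=1-p$. Thus statement (1) reads $1-p\geq\epsilon$ and statement (2) reads $p<1-\epsilon$.

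Next I would dispatch statement (3) using independence. Because the $X_i$ are independent, the joint event factorizes:
\[
P(X_1\in S\wedge\cdots\wedge X_N\in S)=\prod_{i=1}^N P(X_i\in S)=p^N .
\]
Since $N\geq 1$ and the map $t\mapsto t^N$ is strictly increasing on $[0,1]$, the inequality $p^N<(1-\epsilon)^N$ holds if and only if $p<1-\epsilon$. Statement (3) is therefore equivalent to statement (2), and this monotonicity step is really the only place any computation enters.

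To close the loop I would connect (1) to (2), and here is the one wrinkle worth flagging, which I expect to be the sole genuine obstacle: the mismatch between the non-strict $\geq$ in (1) and the strict $<$ in (2). Literally, $1-p\geq\epsilon$ gives $p\leq 1-\epsilon$, which agrees with $p<1-\epsilon$ everywhere except at the boundary value $p=1-\epsilon$. The cleanest resolution is to read (1) with a strict inequality (equivalently (2) with a non-strict one), after which all three statements are exposed as restatements of the single condition $p<1-\epsilon$, and the three-way equivalence follows in one stroke. No cyclic chain of implications is then needed: reducing every clause to the common scalar inequality makes both directions transparent, and the rest is bookkeeping over the i.i.d.\ and range conventions identified above.
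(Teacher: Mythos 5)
The paper states this Fact without proof, so there is no official argument to compare yours against; your collapse of all three clauses onto the single scalar condition on $p := P(X_1\in S)$ is the canonical route and is correct as far as it goes, with identical distribution killing the quantifiers in (1) and (2) and independence plus strict monotonicity of $t\mapsto t^N$ on $[0,1]$ handling (3). More importantly, the ``wrinkle'' you flag is a genuine flaw in the statement itself, not a gap in your proof: as literally written the Fact is false at the boundary. Take $D=\{0,1\}$, $S=\{1\}$, $\epsilon=1/2$, and $X_i$ uniform on $D$; then $P(X_i\in D\backslash S)=1/2\geq\epsilon$, so clause (1) holds, while $P(X_i\in S)=1/2\not<1/2$, so clauses (2) and (3) fail---clause (1) only ever yields $p\leq 1-\epsilon$. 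Your repair (reading (1) strictly, or equivalently relaxing (2) and (3) to non-strict inequalities) is the right one.

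It is worth adding why the flaw is harmless downstream, since the Fact is load-bearing in the hypothesis-testing section: there it is invoked only in the direction (1) $\Rightarrow$ (3), with $\epsilon=1/8$, $N=3000$, and $S$ the complement of the region named in each null hypothesis. Under the non-strict repair one still gets $P(X_1\in S\wedge\cdots\wedge X_N\in S)\leq(7/8)^{3000}<10^{-173}$, so both $p$-values and the rejection of the global null hypothesis at the $10^{-100}$ level are unaffected. Two pieces of bookkeeping you rely on deserve explicit mention: the identity $P(X_i\in D\backslash S)=1-P(X_i\in S)$ requires $X_i\in D$ almost surely (which you do flag), and the equivalence of (2) with (3) requires $0<\epsilon<1$, since $\epsilon$ appears as a free, unquantified parameter in the Fact; both conventions are implicit in the paper's usage, where $\epsilon=1/8$.
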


Let $\oplus_7$ denote the parity function over seven bits, let $\psi = \langle n=5, k=7,\, f = \oplus_7\,,\, K=[7], \eta=1/5\rangle$ be an oracle, and let $G^\psi$ be the genetic algorithm  described in Algorithm \ref{sgapseudo} with population size 1500 and mutation rate 0.004 that treats $\psi$ as a fitness function. Figures \ref{secondexperiment}a and \ref{secondexperiment}b show the 1-frequency of the first and last loci of $G^\psi$ over 800 generations in each of 3000 runs. Let $D^*_m$ be the set $\{x\in D_m\, |\, 0.05<x<0.95\}$. Consider the following two null hypotheses:

\[H^{\mathbf{essential}}_0 : \quad\quad\,\,\,\,\,\sum\limits_{x \in D^*_{1500}}\mathbf{essential}_{(G^\psi, 800)}(x)\geq\frac{1}{8}\]

\[H^{\mathbf{nonessential}}_0 :  \sum\limits_{x \in (D^{\phantom{*}}_{1500}\backslash D^*_{1500})}\!\!\!\!\!\!\!\!\mathbf{nonessential}_{(G^\psi, 800)}(x)\geq\frac{1}{8}\]

\begin{figure}[p!]\begin{center}
\subfigure{\includegraphics[width=.75\textwidth]{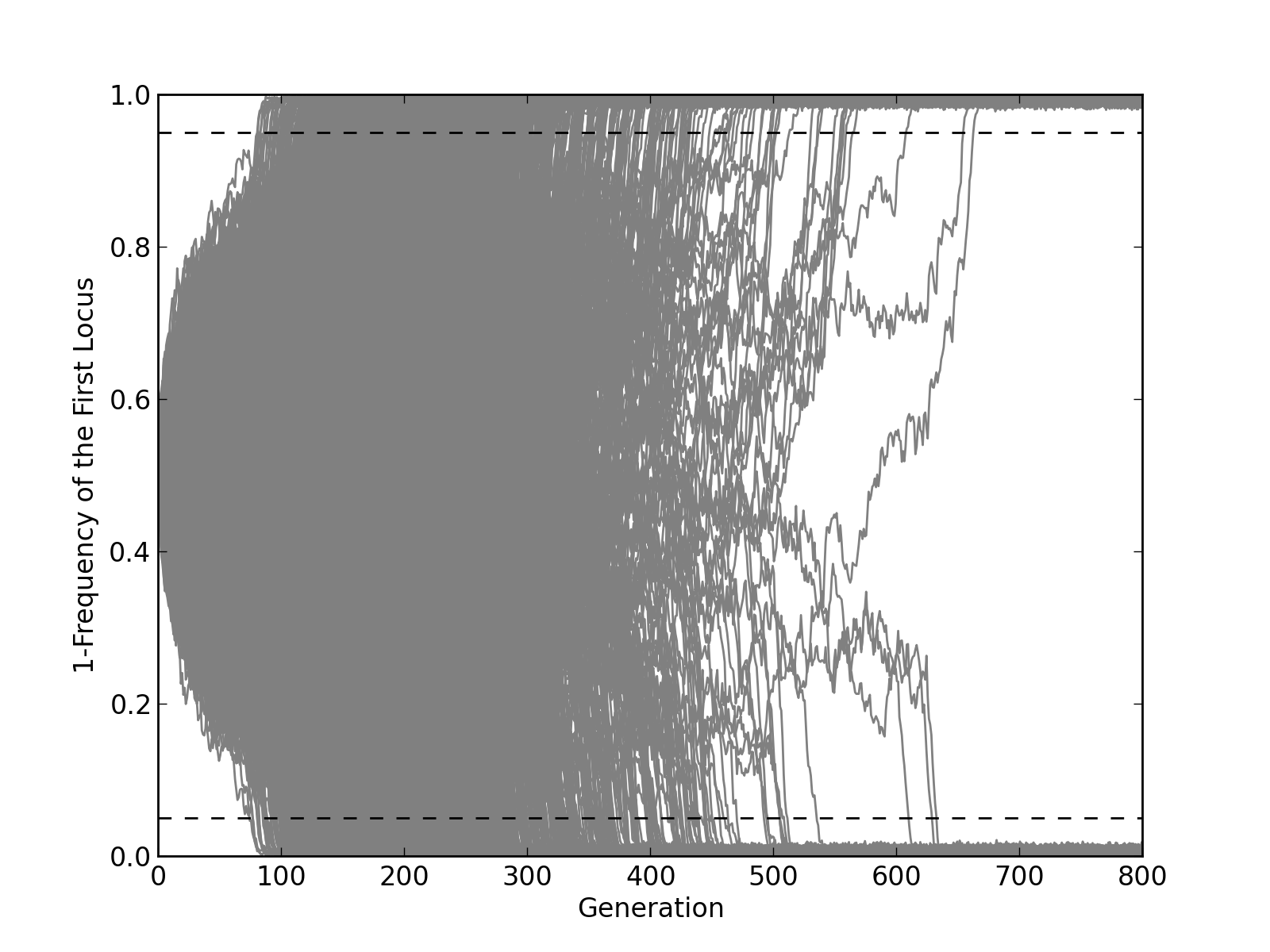}}
\subfigure{\includegraphics[width=.75\textwidth]{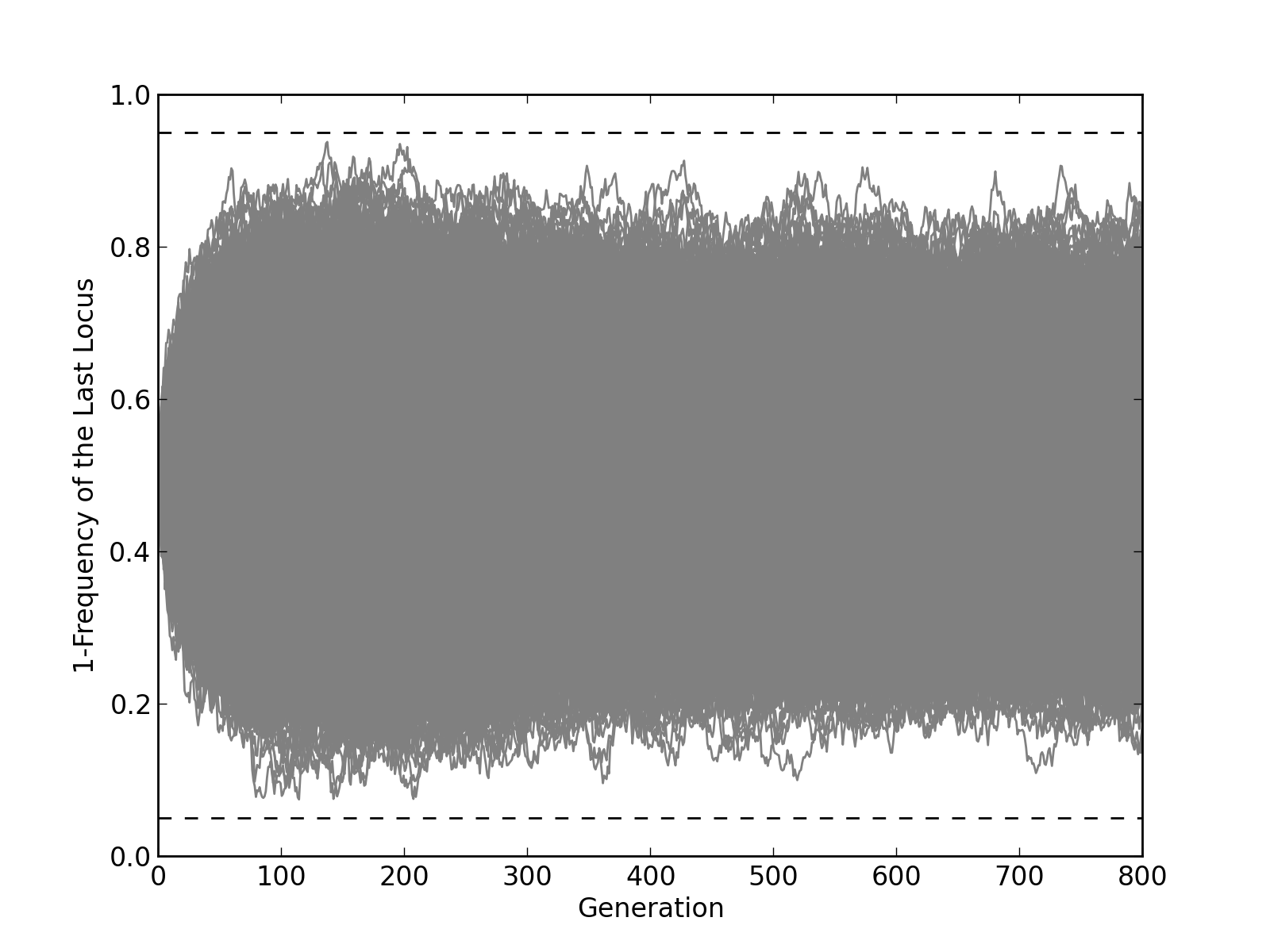}}
\end{center}
\caption{\label{secondexperiment} The 1-frequency dynamics over 3000 runs of the first (\emph{top figure}) and last (\emph{bottom figure}) loci of $G^{\psi}$, where $G$ is the genetic algorithm described in Algorithm \ref{sgapseudo} with population size 1500 and mutation rate 0.004, and $\psi$ is the oracle $\langle n=8, k=7, f = \oplus_7, K=[7],\eta=1/5\rangle$. The dashed lines mark the frequencies 0.05 and 0.95.}  
\end{figure}

\noindent where $\mathbf{essential}$ and $\mathbf{nonessential}$ are as described in the previous section. 

We seek to reject $H^{\mathbf{essential}}_0 \vee H^{\mathbf{nonessential}}_0$ at the $10^{-100}$ level of significance. 
If $H^{\mathbf{essential}}_0$ is true, then for any independent and identically distributed random variables $X_1,\ldots, X_{3000}$ drawn from distribution $\mathbf{essential}_{(G^\psi, 800)}$, and any $i\in [3000]$, $P(X_i\in D^*_{1500})\geq1/8$. 
By Lemma \ref{lkfjg}, and the observation that $D_{1500}^{\phantom{*}}\backslash(D_{1500}^{\phantom{*}}\backslash D^*_{1500}) = D^*_{1500}$, the chance that the 1-frequency of the first locus of $G^\psi$ will be in $D_{1500}^{\phantom{*}}\backslash D^*_{1500}$ in generation 800 in all 3000 runs, as seen in Figure \ref{secondexperiment}a, is less than $(7/8)^{3000}$. 
Which gives us a $p$-value less than $10^{-173}$ for the hypothesis $H^{\mathbf{essential}}_0$. 

Likewise, if $H_0^{\mathbf{nonessential}}$ is true, then for any independent and identically distributed random variables $X_1,\ldots, X_{3000}$ drawn from distribution $\mathbf{nonessential}_{(G^\phi, 800)}$, and any $i\in [3000]$, $P(X_i\in D_{1500}^{\phantom{*}}\backslash D^*_{1500})\geq1/8$. 
So by Lemma \ref{lkfjg}, the chance that the 1-frequency of the last locus of $G^\phi$ will be in $D^*_{1500}$ in generation 800 in all 3000 runs, as seen in Figure \ref{secondexperiment}b, is less than $(7/8)^{3000}$. 
Which gives us a $p$-value less than $10^{-173}$ for the hypothesis $H^{\mathbf{nonessential}}_0$. 
Both $p$-values are less than a Bonferroni adjusted critical value of $10^{-100}/2$, so we can reject the global null hypotheses $H^{\mathbf{essential}}_0 \vee H^{\mathbf{nonessential}}_0$ at the $10^{-100}$ level of significance. We are left with the following conclusions:

\begin{concl}\label{c7}
\[\quad\,\,\,\sum\limits_{x \in D^*_{1500}}\mathbf{essential}_{(G^\phi, 800)}(x)<\frac{1}{8}\]
\end{concl}
\begin{concl}\label{c8}
\[\sum\limits_{x \in (D^{\phantom{*}}_{1500}\backslash D^*_{1500})}\!\!\!\!\!\!\!\!\mathbf{nonessential}_{(G^\phi, 800)}(x)<\frac{1}{8}\] 
\end{concl}

\begin{figure}[H]
\begin{algorithm}[H]
\dontprintsemicolon
$pop$ $\leftarrow$ population of $G^\phi$ after 800 generations\; 
\For{$i \leftarrow 1$ to $n$}
{
	\eIf{ $0.05< $ 1-frequency of locus $i$ in $pop$ $< 0.95$}
	{
   	 	x[$r$][$i$] = 0
	}
	{
    	x[$r$][$i$] = 1
	}
}
\caption{\label{ggggg}$\mathcal G^\phi$}
\end{algorithm}
\end{figure}

\begin{concl} The learning problem $\langle k=7, f=\oplus_7, \eta = 1/5\rangle$ can be attributewise $\frac{1}{8}$-approximately correctly solved in $O(1)$ queries and $O(n)$ time.
\end{concl}
\begin{proof}[Argument]
For any oracle $\phi = \langle n, k = 7, f = \oplus_7, K, \eta = 1/5\rangle$ with target concept $c$, let $h$ be the hypothesis returned by the algorithm $\mathcal G^\phi$ shown in Algorithm \ref{ggggg}. It is easily seen that $\mathcal G^\phi$ runs in $O(1)$ queries and $O(n)$ time; by Conclusions \ref{c7} and \ref{c8}, for any $i\in[n]$,  $P(c_i\not = h_i)<\frac{1}{8}$

\end{proof}
That $\langle k = 7, f = \oplus_7, \eta = 1/5\rangle$ is approximately correctly learnable in $O(\log^{1.585}n)$ queries and $O(n\log^{1.585}n)$ time follows from the conclusion above and from Theorem \ref{mainthm} in the Appendix.

\section{Discussion and Conclusion} 

We used an empirico symmetry analytic proof technique with a 1 in $10^{100}$ chance of error to conclude that a genetic algorithm with uniform crossover can be used to solve the noisy learning parities problem $\langle k = 7, f = \oplus_7, \eta=1/5\rangle$ in $O(\log^{1.585}n)$ queries and $O(n\log^{1.585}n)$ time. These bounds are marginally higher than the known optimal bounds for a more difficult version of the problem in which the oracle is queried \emph{non-adaptively}: $O(\log n)$ and $O(n)$ respectively \cite{feldman2007attribute, hofmeister1999application}. Tighter bounds on the query complexity than the one obtained can be achieved using recursive majority voting with a higher branching factor (e.g. 5 instead of 3). However, for our purposes the bounds obtained suffice. The finding that a genetic algorithm with uniform crossover can straightforwardly be used to obtain close-to-optimal bounds on the queries and running time required to solve to solve $\langle 7, \oplus_7, 1/5\rangle$ is wholly unexpected and lends support to the hypothesis that implicit concurrent multivariate effect evaluation powers general-purpose, non-local, noise-tolerant optimization in genetic algorithms with uniform crossover.  

\note{Talk about generalization}

\bibliographystyle{plain}
\bibliography{refs}
\newpage
\noindent \LARGE \textbf{Appendix} \normalsize
\begin{appendix}

\section{Formal Analysis}

\begin{figure}[H]\begin{center}
\begin{minipage}{.85\textwidth} 
\begin{algorithm}[H]
\dontprintsemicolon
\SetLine
\Indp{
\SetAlgoNoLine{

	\eIf{$\ell=1$} 		
	{
		\Return $M(x_1,x_2,x_3$) \;
	}
	{
		$y_1$ = \textsc{Recursive-3-Way-Maj}$(\ell-1,(x_1, \ldots, x_{3^{\ell-1}}))$\;
		$y_2$ = \textsc{Recursive-3-Way-Maj}$(\ell-1,(x_{3^{\ell-1}+1}, \ldots, x_{2\times3^{\ell-1}}))$\; 
		$y_3$ = \textsc{Recursive-3-Way-Maj}$(\ell-1,(x_{2\times 3^{\ell-1}+1}, \ldots, x_{3^\ell}))$ \;
		\Return $M(y_1,y_2,y_3)$\;
	}	
} 
}
\caption{\label{lkdfgh}\textsc{Recursive-3-Way-Maj}($\ell, \left( x_1,\ldots, x_{3^{\ell}} \right) $) }			
\end{algorithm}	
\end{minipage}
\end{center}
\end{figure}

Let $M:\{0,1\}^3\rightarrow \{0,1\}$ denote the three way majority function that returns the mode of its three arguments. So, $M(1,1,1) = M(0,1,1) = M(1,0,1) =  M(1, 1, 0) = 1$, and $M(0,0,0) = M(1,0,0) =  M(0,1,0) = M(0,0,1) = M(0,0,0) = 0$. 

\begin{lem} \label{lkfjghkd} Let $X_1, X_2, X_3$ be independent binary random variables such that for any $i\in\{1,2,3\}$, $P(X_i=0)<\epsilon$ . Then $P(M(X_1,X_2,X_3)=0)<4\epsilon^2$
\end{lem}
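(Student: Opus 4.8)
The plan is to reduce the statement to a union bound over pairs. Write $p_i = P(X_i = 0)$, so that the hypothesis reads $p_i < \epsilon$ for each $i \in \{1,2,3\}$. First I would observe that the three-way majority function returns $0$ precisely when at least two of its three arguments equal $0$; hence the event $\{M(X_1,X_2,X_3) = 0\}$ is exactly the event that at least two of the three events $A_i := \{X_i = 0\}$ occur.

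The key step is then to note that ``at least two of $A_1, A_2, A_3$ occur'' is contained in the union $\bigcup_{i<j}(A_i \cap A_j)$ over the $\binom{3}{2}=3$ distinct pairs. Applying the union bound gives $P(M=0) \le \sum_{i<j} P(A_i \cap A_j)$. By the independence of $X_1, X_2, X_3$, each term factors as $P(A_i \cap A_j) = p_i p_j < \epsilon^2$, and summing the three pairwise contributions yields $P(M=0) < 3\epsilon^2 < 4\epsilon^2$, as required.

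There is no serious obstacle here; the only point demanding care is the combinatorics of the ``at least two'' event, namely checking that the three pairs exhaust every way a majority of zeros can arise and that the union bound is not distorted by the overlap coming from the all-zeros outcome $A_1 \cap A_2 \cap A_3$. In fact the same bookkeeping carried out exactly gives $P(M=0) = p_1 p_2 + p_1 p_3 + p_2 p_3 - 2 p_1 p_2 p_3$, which is already strictly below $3\epsilon^2$; the slack up to the stated $4\epsilon^2$ suggests the looser constant is chosen deliberately, presumably to streamline the recursive application of the lemma to \textsc{Recursive-3-Way-Maj} in Algorithm \ref{lkdfgh}.
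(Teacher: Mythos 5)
Your proof is correct, and it takes a different decomposition from the paper's. The paper partitions the event $\{M(X_1,X_2,X_3)=0\}$ into its four \emph{disjoint} atomic outcomes (all three variables zero, or exactly one equal to $1$), uses independence to bound the first term by $\epsilon^3$ and each of the remaining three by $\epsilon^2$, and concludes $P(M(X_1,X_2,X_3)=0) < \epsilon^3 + 3\epsilon^2 < 4\epsilon^2$. You instead \emph{cover} the event by the three pairwise intersections $A_i \cap A_j$ and apply the union bound, which buys you three things: a sharper constant ($3\epsilon^2$ rather than $4\epsilon^2$), the observation that only pairwise independence of the $X_i$ is needed, and a bound valid for every $\epsilon>0$ --- whereas the paper's final step $\epsilon^3 + 3\epsilon^2 < 4\epsilon^2$ tacitly uses $\epsilon<1$ (harmless, since the lemma is trivial when $\epsilon\geq 1$, but your route avoids even that caveat). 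Your exact identity $P(M=0)=p_1p_2+p_1p_3+p_2p_3-2p_1p_2p_3$ is also right, and your guess about the provenance of the constant is essentially correct: the $4$ arises simply from rounding the paper's $\epsilon^3+3\epsilon^2$ up to $4\epsilon^2$, giving the clean recursion $4^{2^\ell-1}\epsilon^{2^\ell}$ in the recursive 3-way majority theorem; your constant would propagate to $3^{2^\ell-1}\epsilon^{2^\ell}$, and the downstream corollary bounding the error by $1/2^{2^\ell}$ at $\epsilon = \frac{1}{8}$ would go through a fortiori.
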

\noindent\emph{Proof}: \begin{align*} P(M(X_1,X_2,X_3)=0)\, =\,&P(X_1=0 \wedge X_2 = 0 \wedge X_3 = 0) +\\
							&P(X_1=1 \wedge X_2 = 0 \wedge X_3 = 0) +\\
							&P(X_1=0 \wedge X_2 = 1 \wedge X_3 = 0) +\\
							&P(X_1=0 \wedge X_2 = 0 \wedge X_3 = 1)\\
							<\,& \epsilon^3 +3\epsilon^2\\
							<\,&4\epsilon^2
					\end{align*}
Consider Algorithm \ref{lkdfgh}. 

\begin{thm}[Recursive 3-way majority voting] \label{r3wmv}For any $\epsilon>0$ and any $\ell\in \mathbb Z^+$, let $X_1,\ldots, X_{3^\ell}$ be independent binary random variables such that $\forall i\in[3^\ell], P(X_i=0)<\epsilon$. Then $P($\textsc{Recursive-3-Way-Maj}($\ell, (X_1, \ldots, X_{3^\ell}))=0)<4^{2^\ell-1}\epsilon^{2^\ell}$  
\end{thm}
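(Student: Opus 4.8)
The plan is to prove the bound by induction on $\ell$, using Lemma \ref{lkfjghkd} as the inductive engine at every level. It is convenient to abbreviate the target bound as $b_\ell = 4^{2^\ell-1}\epsilon^{2^\ell}$, so that the claim reads $P(\textsc{Recursive-3-Way-Maj}(\ell,(X_1,\ldots,X_{3^\ell}))=0)<b_\ell$.

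For the base case $\ell=1$, Algorithm \ref{lkdfgh} simply returns $M(X_1,X_2,X_3)$. Since $P(X_i=0)<\epsilon$ for each $i$ and the $X_i$ are independent, Lemma \ref{lkfjghkd} gives $P(M(X_1,X_2,X_3)=0)<4\epsilon^2$, and $4\epsilon^2 = 4^{2^1-1}\epsilon^{2^1} = b_1$, so the base case holds directly.

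For the inductive step I would assume the claim for $\ell-1$ and examine a level-$\ell$ call on inputs $X_1,\ldots,X_{3^\ell}$. The algorithm partitions these into three consecutive blocks of size $3^{\ell-1}$, recurses on each to produce $Y_1,Y_2,Y_3$, and returns $M(Y_1,Y_2,Y_3)$. Two facts drive the argument. First, every variable in each block satisfies $P(X_i=0)<\epsilon$, so the inductive hypothesis applies verbatim to each recursive call and yields $P(Y_j=0)<b_{\ell-1}$ for $j\in\{1,2,3\}$. Second, the three blocks are disjoint and the $X_i$ are mutually independent, so $Y_1,Y_2,Y_3$---each being a deterministic function of its own block---are themselves independent. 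I would flag this independence check as the one place deserving care, since Lemma \ref{lkfjghkd} requires independent inputs; it holds precisely because functions of disjoint subsets of mutually independent variables are independent.

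With independence established, applying Lemma \ref{lkfjghkd} to $Y_1,Y_2,Y_3$ with threshold $b_{\ell-1}$ gives $P(M(Y_1,Y_2,Y_3)=0)<4b_{\ell-1}^2$. It then remains only to confirm the recurrence $4b_{\ell-1}^2 = b_\ell$, which is routine exponent bookkeeping: $4\cdot(4^{2^{\ell-1}-1})^2 = 4^{1+2(2^{\ell-1}-1)} = 4^{2^\ell-1}$ while $(\epsilon^{2^{\ell-1}})^2 = \epsilon^{2^\ell}$, giving $4b_{\ell-1}^2 = 4^{2^\ell-1}\epsilon^{2^\ell} = b_\ell$. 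This closes the induction. I anticipate no genuine obstacle beyond verifying the independence of the three recursive outputs and carrying the exponents through correctly; all of the combinatorial content is supplied by the single-level Lemma \ref{lkfjghkd}, and the recursion merely squares the failure probability (up to the factor of $4$) at each of the $\ell$ levels.
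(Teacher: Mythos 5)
Your proposal is correct and follows essentially the same route as the paper's proof: induction on $\ell$ with Lemma \ref{lkfjghkd} handling both the base case and the inductive step, independence of $Y_1,Y_2,Y_3$ from the disjointness of the three blocks, and the same exponent bookkeeping $4\bigl(4^{2^{\ell-1}-1}\epsilon^{2^{\ell-1}}\bigr)^2 = 4^{2^\ell-1}\epsilon^{2^\ell}$. Your explicit justification of the independence of the three recursive outputs is in fact slightly more careful than the paper, which asserts it in one sentence.
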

\begin{proof} The proof is by induction on $\ell$. The base case, when $\ell=1$, follows from lemma \ref{lkfjghkd}. We assume the inductive hypothesis for $\ell=k$ and prove it for $\ell=k+1$. Let $Y_1, \ldots, Y_{3}$ be random binary variables that give the values of $y_1, y_2, y_3$ in the first (top-level/non-recursive) pass of Algorithm. Three applications of the inductive hypothesis for $\ell = k$ gives us that $\forall i \in \{1,2,3\}$, $P(Y_i=0)<4^{2^k-1}\epsilon^{2^k}$. Since $X_1, \ldots, X_{3^{k+1}}$ are independent, $Y_1, \ldots, Y_{3}$ are also independent. So, by Lemma \ref{lkfjghkd}, 
\begin{align*}
P(\textsc{Recursive-3-Way-Maj}(\ell, (X_1, \ldots, X_{3^{k+1}}))=0) &< 4\left(4^{2^k-1}\epsilon^{2^k}\right)^2\\
& = 4 \left(4^{2^k-1}\right)^2\left(\epsilon^{2^k}\right)^2\\
& = 4 \left(4^{2(2^k-1)}\right)\left(\epsilon^{2(2^k)}\right)\\
& = 4 \left(4^{2^{k+1-2}}\right)\left(\epsilon^{2^{k+1}}\right)\\
& = 4^{2^{k+1}-1}\epsilon^{2^{k+1}}
\end{align*}
\end{proof}

\begin{cor} \label{r3wmv} For any $\ell\in \mathbb Z^+$, let $X_1,\ldots, X_{3^\ell}$ be independent binary random variables such that for any $ i\in[3^\ell], P(X_i=0)<\frac{1}{8}$. Then,\\ $P($\textsc{Recursive-3-Way-Majority}($\ell, (X_1, \ldots, X_{3^\ell}))=0)<1/2^{2^\ell}$  
\end{cor}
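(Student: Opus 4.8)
The plan is to obtain this as an immediate specialization of the Recursive 3-way majority voting theorem just established, setting its free parameter to $\epsilon=\tfrac{1}{8}$. The corollary's hypotheses are exactly those of that theorem under this choice: the assumption $P(X_i=0)<\tfrac18$ for all $i\in[3^\ell]$ matches the theorem's bound $P(X_i=0)<\epsilon$ with $\epsilon=\tfrac18$, and the independence assumption is identical. Hence the first (and essentially only) step is to apply the theorem as a black box, which gives $P(\textsc{Recursive-3-Way-Maj}(\ell,(X_1,\ldots,X_{3^\ell}))=0)<4^{2^\ell-1}\bigl(\tfrac18\bigr)^{2^\ell}$.

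The only remaining work is to verify that this right-hand side is strictly less than $1/2^{2^\ell}$. Writing $4=2^2$ and $8=2^3$, I would factor out a single power of $4$ and combine the rest: $4^{2^\ell-1}\bigl(\tfrac18\bigr)^{2^\ell}=\tfrac14\,(4/8)^{2^\ell}=\tfrac14\cdot 2^{-2^\ell}$, which is strictly smaller than $2^{-2^\ell}=1/2^{2^\ell}$. Chaining this with the bound from the theorem yields the claim.

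There is essentially no obstacle here—the statement is a genuine corollary, and the argument is a substitution followed by one line of exponent arithmetic. The only point deserving a moment's care is the exponent bookkeeping: one must notice that $4^{2^\ell}/8^{2^\ell}=(1/2)^{2^\ell}$, so that the leading constant $\tfrac14$ is precisely what pushes the bound \emph{below} $1/2^{2^\ell}$ rather than merely equal to it. An alternative, equally short route would be to rerun the theorem's induction with $\epsilon=\tfrac18$ fixed throughout, but invoking the theorem directly is cleaner and avoids repeating the argument.
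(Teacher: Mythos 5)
Your proposal is correct and follows exactly the paper's own route: invoke Theorem \ref{r3wmv} with $\epsilon=\tfrac{1}{8}$ and verify by exponent arithmetic that $4^{2^\ell-1}\left(\tfrac{1}{8}\right)^{2^\ell}=\tfrac{1}{4}\left(\tfrac{1}{2}\right)^{2^\ell}<\tfrac{1}{2^{2^\ell}}$, which is precisely the computation in the paper's proof. Nothing is missing, and your observation that the leading factor $\tfrac{1}{4}$ is what makes the inequality strict is exactly the right point of care.
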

\begin{proof} Follows from Theorem  \ref{r3wmv} and the  observation that 
\begin{align*}
4^{2^\ell-1}\left(\frac{1}{8}\right)^{2^\ell}
&=4^{2^\ell-1}\left(\frac{1}{4}\right)^{2^\ell}\left(\frac{1}{2}\right)^{2^\ell}\\
&=\frac{1}{4}\left(\frac{1}{2}\right)^{2^\ell}\\
&<\frac{1}{2^{2^\ell}}\\
\end{align*}
\end{proof}

\noindent Noting that $0$ and $1$ are just labels in the above gives us the following:

\begin{cor} \label{ldfjgldj}For any $\ell\in \mathbb Z^+$, let $X_1,\ldots, X_{3^\ell}$ be independent binary random variables such that for all $i\in[3^\ell], P(X_i=1)<\frac{1}{8}$. Then,\\ $P($\textsc{Recursive-3-Way-Majority}($\ell, (X_1, \ldots, X_{3^\ell}))=1)<1/2^{2^\ell}$  
\end{cor}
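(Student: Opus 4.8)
The plan is to obtain this corollary from the preceding corollary (Corollary~\ref{r3wmv}) purely by the self-complementary symmetry of majority voting, which is exactly the content of the remark ``$0$ and $1$ are just labels'' that immediately precedes the statement. The idea is that swapping the roles of $0$ and $1$ in every input bit swaps them in the output of the recursive majority circuit, so a bound on the probability of outputting $0$ under a $1/8$ cap on each $P(X_i=0)$ transports verbatim to a bound on the probability of outputting $1$ under a $1/8$ cap on each $P(X_i=1)$.

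First I would record the single structural fact that drives everything: the three-way majority function commutes with bit-complementation. Writing $\bar b = 1-b$, one reads directly off the defining table of $M$ that $M(\bar x_1,\bar x_2,\bar x_3) = 1 - M(x_1,x_2,x_3)$, since the mode of three complemented bits is the complement of the mode. Next I would lift this identity to the full recursive function by a one-line induction on $\ell$ mirroring the induction in Theorem~\ref{r3wmv}. The base case $\ell=1$ is the identity just noted. For the inductive step, each top-level argument $y_j$ is itself a recursive majority over a block of complemented inputs, so the inductive hypothesis flips each $y_j$, and one further application of the base-case identity to the outer $M(y_1,y_2,y_3)$ flips the output; hence $\textsc{Recursive-3-Way-Maj}(\ell,(\bar x_1,\ldots,\bar x_{3^\ell})) = 1 - \textsc{Recursive-3-Way-Maj}(\ell,(x_1,\ldots,x_{3^\ell}))$ for every fixed input.

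Finally I would set $Y_i = \bar X_i = 1 - X_i$. The $Y_i$ are independent because the $X_i$ are, and $P(Y_i = 0) = P(X_i = 1) < \tfrac{1}{8}$ by hypothesis, so the $Y_i$ meet the premise of the preceding corollary. Applying it gives $P(\textsc{Recursive-3-Way-Maj}(\ell,(Y_1,\ldots,Y_{3^\ell}))=0) < 1/2^{2^\ell}$. By the complementation identity of the previous paragraph, the event $\{\textsc{Recursive-3-Way-Maj}(\ell,(Y_1,\ldots,Y_{3^\ell}))=0\}$ is literally the event $\{\textsc{Recursive-3-Way-Maj}(\ell,(X_1,\ldots,X_{3^\ell}))=1\}$, which delivers the claimed bound $P(\textsc{Recursive-3-Way-Maj}(\ell,(X_1,\ldots,X_{3^\ell}))=1) < 1/2^{2^\ell}$.

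There is essentially no obstacle here: the entire content is the invariance of $M$ under swapping the two labels, and the quantitative bound is imported unchanged from the preceding corollary. The only step needing even a line of verification is that the deterministic complementation identity survives the recursion, and that is immediate from the same induction already carried out for Theorem~\ref{r3wmv}.
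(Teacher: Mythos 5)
Your proposal is correct and is precisely the paper's argument: the paper proves this corollary with the single remark that ``$0$ and $1$ are just labels'' in Corollary~\ref{r3wmv}, and your complementation identity $M(\bar x_1,\bar x_2,\bar x_3)=1-M(x_1,x_2,x_3)$, lifted through the recursion and applied to $Y_i=1-X_i$, is exactly the formalization of that relabeling. No gaps; you have simply made explicit what the paper leaves implicit.
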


\begin{figure}[t!]\begin{center}
\begin{minipage}{.85\textwidth}
\begin{algorithm}[H]
\dontprintsemicolon
\SetLine
\KwIn{$n$}
\KwIn{$\epsilon$}
$\ell \leftarrow \lceil \log_2(\log_2 n + \log_2\frac{1}{\epsilon})\rceil+1$\;
\For{$r \leftarrow 1$ to $3^\ell$}
{
    x[$r$] = $\mathcal A^\phi(n)$
}
\For{$i \leftarrow 1$ to $n$}
{
    h[$i$] = \textsc{Recursive-3-Way-Majority}($\ell$, (x[0][$i$], \ldots, x[$3^\ell$][$i$]))
}
\Return h

\caption[boo]{\label{majorityGA}$\mathcal B^\phi(n,\epsilon)$}
\normalsize
\end{algorithm}
\end{minipage}
\end{center}
\end{figure}

\begin{thm} \label{mainthm}For any $0<\epsilon<1/8$, if the learning problem $\langle k, f, \eta \rangle$ can be attributewise $\epsilon$-approximately correctly solved in $O(1)$ queries and $O(n)$ time, then $\langle k, f, \eta \rangle$ can be approximately correctly solved in $O(\log^{1.585}n)$ queries and $O(n\log^{1.585} n)$ time\end{thm}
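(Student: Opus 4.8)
The plan is to show that Algorithm \ref{majorityGA}, $\mathcal B^\phi(n,\epsilon)$, is itself an approximately correct solver, built by wrapping per-locus recursive voting around the hypothesized attributewise $\epsilon$-approximate solver $\mathcal A$. First I would fix an oracle $\phi$ with target concept $c^*$ and observe that the $3^\ell$ invocations of $\mathcal A^\phi(n)$ in the loop draw fresh oracle and algorithmic randomness, hence are mutually independent; consequently, for every locus $i\in[n]$ the sampled bits $x[1][i],\ldots,x[3^\ell][i]$ are independent binary random variables. By the attributewise guarantee each of these bits disagrees with $c^*_i$ with probability strictly less than $\epsilon<1/8$. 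Splitting on the target value, if $c^*_i=1$ then $P(x[r][i]=0)<1/8$ for every $r$, so Corollary \ref{r3wmv} bounds the probability that the recursive vote returns the wrong value by $1/2^{2^\ell}$; symmetrically, if $c^*_i=0$ then $P(x[r][i]=1)<1/8$ and Corollary \ref{ldfjgldj} yields the same bound. Either way $P(h_i\neq c^*_i)<1/2^{2^\ell}$.

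Next I would apply a union bound over the $n$ loci to get $P(h\neq c^*)\le\sum_{i=1}^n P(h_i\neq c^*_i)<n/2^{2^\ell}$, and then verify that the choice $\ell=\lceil\log_2(\log_2 n+\log_2\tfrac{1}{\epsilon})\rceil+1$ drives this below $\epsilon$. The ceiling together with the additive $1$ gives $2^\ell\ge 2(\log_2 n+\log_2\tfrac{1}{\epsilon})$, whence $2^{2^\ell}\ge(n/\epsilon)^2$ and therefore $n/2^{2^\ell}\le\epsilon^2/n<\epsilon$. This certifies that $\mathcal B$ returns a hypothesis satisfying $P(h\neq c^*)<\epsilon$ for the given $\epsilon$, i.e.\ that $\langle k,f,\eta\rangle$ is approximately correctly solvable.

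The remaining step, and in my view the only delicate one, is the complexity accounting, which is exactly where the exponent $1.585$ is born. Each of the $3^\ell$ calls to $\mathcal A$ costs $O(1)$ queries and $O(n)$ time, and computing the $n$ recursive three-way majorities costs $O(n\,3^\ell)$ time, since the vote tree over $3^\ell$ leaves has $1+3+\cdots+3^{\ell-1}=O(3^\ell)$ internal nodes, each doing $O(1)$ work. Thus both bounds hinge on estimating $3^\ell$. Treating $\epsilon$ as a constant, $\ell=\log_2\log_2 n+O(1)$, so by the change-of-base identity $3^{\log_2\log_2 n}=(\log_2 n)^{\log_2 3}$ with $\log_2 3=1.585\ldots$, giving $3^\ell=O(\log^{1.585}n)$. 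The query complexity is then $O(3^\ell)=O(\log^{1.585}n)$ and the time complexity is $O(n\,3^\ell)=O(n\log^{1.585}n)$, as claimed. The point requiring care is quarantining the dependence on $\epsilon$ inside the hidden constant so that the stated $n$-asymptotics remain clean.
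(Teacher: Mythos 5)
Your proposal is correct and follows essentially the same route as the paper's proof: the same wrapper algorithm $\mathcal B$, the same per-locus invocation of the two recursive three-way-majority corollaries (splitting on whether $c^*_i$ is $0$ or $1$), the same union bound over the $n$ loci with the same choice of $\ell$, and the same $3^\ell = \Theta\bigl((\log n)^{\log_2 3}\bigr)$ accounting for queries and time. Your two small deviations are, if anything, improvements: the estimate $n/2^{2^\ell} \le \epsilon^2/n$ is tighter than the paper's corresponding claim, and your $O(1)$-work-per-internal-node count of the vote tree (giving $O(n\,3^\ell)$ total) is cleaner than the paper's recurrence $T(x) = 3T(x/3) + x$, whose subsequent simplification treats $3^{\log_2 x}$ as if it were $x$ rather than $x^{\log_2 3}$---a slip your accounting sidesteps entirely.
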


\noindent \emph{Proof} Let $\mathcal A$ be an algorithm that attributewise $\epsilon$-approximately correctly solves $\langle k, f, \eta\rangle$ in $O(1)$ queries and $O(n)$ time. Let $\phi=\langle n, k,f, K, \eta \rangle$  be some oracle.  Consider Algorithm \ref{majorityGA} that executes $\ell =  \lceil \log_2(\log_2 n + \log_2\frac{1}{\epsilon})\rceil$ runs of the algorithm $\mathcal A^\phi$. Let $c$ be the target concept, and for any $i\in[n]$, let $H_i$ be a binary random variable that gives the value of h[$i$] (set in line 6), and let $H$ be the random variable that give the value of h. 

\begin{claim}\label{claima}  For all $i\in[n]$, $P(H_i \not=c_i)<1/2^{2^\ell}$. 
\end{claim}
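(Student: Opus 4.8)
The plan is to prove the claim one locus at a time, reducing it to a direct application of the recursive-majority corollaries established above. First I would fix an arbitrary $i\in[n]$ and read off from Algorithm \ref{majorityGA} that $H_i$ is obtained by feeding the $3^\ell$ bits $x[1][i],\ldots,x[3^\ell][i]$ into \textsc{Recursive-3-Way-Majority}, where each $x[r][i]$ is the $i$-th coordinate of a separate invocation of $\mathcal A^\phi(n)$. The key structural point is that these $3^\ell$ invocations draw fresh internal randomness and fresh, independently noisy oracle responses, so for fixed $i$ the bits $x[1][i],\ldots,x[3^\ell][i]$ are mutually independent binary random variables. (Note that I do not need the coordinates within a single run to be independent of one another, since each locus is analyzed in isolation.)

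Next I would invoke the defining guarantee of $\mathcal A$: because $\mathcal A$ attributewise $\epsilon$-approximately correctly solves $\langle k,f,\eta\rangle$, we have $P(x[r][i]\neq c_i)<\epsilon$ for every run $r$, and since the hypothesis of the theorem assumes $\epsilon<1/8$, this gives $P(x[r][i]\neq c_i)<1/8$ for all $r$. I would then finish with a case split on the target bit $c_i$. If $c_i=1$, then $P(x[r][i]=0)<1/8$ for each $r$, and the first corollary above (the one bounding $P(\textsc{Recursive-3-Way-Majority}=0)$ by $1/2^{2^\ell}$ whenever each input satisfies $P(X_i=0)<1/8$) yields $P(H_i=0)<1/2^{2^\ell}$, i.e. $P(H_i\neq c_i)<1/2^{2^\ell}$. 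If instead $c_i=0$, then $P(x[r][i]=1)<1/8$ for each $r$, and Corollary \ref{ldfjgldj} yields $P(H_i=1)<1/2^{2^\ell}$, again giving $P(H_i\neq c_i)<1/2^{2^\ell}$. Since $i$ was arbitrary, the bound holds at every locus, establishing the claim.

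The only genuinely non-mechanical step, and the one I expect to be the main obstacle, is justifying the mutual independence of the $3^\ell$ runs at a fixed locus, since this is precisely the hypothesis that the corollaries require and everything else is a routine case split. This is really a modeling point: it rests on the assumption that each invocation of $\mathcal A^\phi(n)$ uses independent algorithmic coin flips and that the oracle $\phi$ returns independently perturbed answers across distinct queries. Once that independence is granted, the remainder of the argument is immediate from the corollaries already in hand.
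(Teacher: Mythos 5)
Your proposal is correct and follows essentially the same route as the paper's own proof: the paper likewise fixes a locus, observes that $X_{(1,i)},\ldots,X_{(3^\ell,i)}$ are independent across the $3^\ell$ runs, and invokes Corollaries \ref{r3wmv} and \ref{ldfjgldj} together with the attributewise premise. You merely make explicit what the paper leaves compressed---the case split on the value of $c_i$ (which is why both corollaries are needed) and the modeling justification for independence via fresh algorithmic randomness and independently noisy oracle responses per run.
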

\begin{proof}[Proof of Claim \ref{claima}]
For any $r\in[3^\ell]$, let $X_{(r,i)}$ be a binary random variable that gives the value of x[$r$][$i$] where x[$r$], set in line 3, is the hypothesis returned by $\mathcal A$ in run $r$.  Clearly, $X_{(1,i)},\ldots, X_{(3^\ell,i)}$ are independent. The claim follows from Corollaries  \ref{r3wmv} and \ref{ldfjgldj}, and the premise that $\langle 7, \oplus_7, 1/5\rangle$ is attributewise $\frac{1}{8}$-approximately correctly learnable by $\mathcal A$. 
\end{proof}

\begin{claim}\label{claimb} If $n/2^{2^\ell}<\epsilon$, then $P(H\not=c)<\epsilon$
\end{claim}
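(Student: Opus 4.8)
The plan is to control the probability that the aggregated hypothesis $H$ disagrees with the target concept $c$ \emph{anywhere} by a union bound over the $n$ loci, and then to close the argument with the hypothesis $n/2^{2^\ell}<\epsilon$ supplied in the statement. The heavy lifting has already been done in Claim \ref{claima}, which gives a uniform per-locus error bound; what remains is to assemble these coordinatewise guarantees into a single guarantee about $H$ as a whole.

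First I would reframe the failure event. Since $H$ and $c$ are both elements of $\{0,1\}^n$, we have $H\not=c$ exactly when $H_i\not=c_i$ for at least one index $i$; that is, $\{H\not=c\}=\bigcup_{i=1}^n\{H_i\not=c_i\}$. This reinterpretation is the only genuinely conceptual step, and it is worth stressing that it requires \emph{no} independence among the per-locus failure events $\{H_i\not=c_i\}$, so Boole's inequality applies unconditionally. Applying it yields $P(H\not=c)\leq\sum_{i=1}^n P(H_i\not=c_i)$.

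Next I would substitute the bound from Claim \ref{claima}, namely $P(H_i\not=c_i)<1/2^{2^\ell}$ for every $i\in[n]$. Summing these $n$ strict inequalities gives
\[
P(H\not=c)\;\leq\;\sum_{i=1}^n P(H_i\not=c_i)\;<\;\frac{n}{2^{2^\ell}}.
\]
The strictness survives the union bound because we are summing finitely many strictly-bounded terms (at least one, since $n\geq 1$), so the total is strictly below $n/2^{2^\ell}$. Invoking the hypothesis $n/2^{2^\ell}<\epsilon$ then delivers $P(H\not=c)<\epsilon$, completing the claim.

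I do not anticipate a real obstacle here: the argument is a textbook union bound, and its correctness rests entirely on Claim \ref{claima}, which is already in hand. The only point demanding a little care is bookkeeping on the inequalities---ensuring the strict per-locus bounds chain through the $\leq$ of Boole's inequality to produce a strict final bound---but this is routine. The one substantive thing this claim does \emph{not} address, and which the surrounding theorem must still handle separately, is the choice of $\ell$ in Algorithm \ref{majorityGA}: one must verify that $\ell=\lceil\log_2(\log_2 n+\log_2\tfrac{1}{\epsilon})\rceil+1$ actually forces the premise $n/2^{2^\ell}<\epsilon$, and that this choice of $\ell$ yields the advertised $O(\log^{1.585}n)$ query count via $3^\ell=O(\log^{\log_2 3}n)$. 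Those verifications live outside the claim proper.
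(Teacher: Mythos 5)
Your proposal is correct and matches the paper's proof, which simply cites Claim \ref{claima} together with the union bound; your write-up is just a fully spelled-out version of that same one-line argument. No gaps.
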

\begin{proof}[Proof of Claim \ref{claimb}]
The claim follows from Claim \ref{claima} and the union bound.
\end{proof}

\begin{claim}\label{claimc} $n/2^{2^\ell}<\epsilon$
\end{claim}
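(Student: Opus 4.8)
The plan is to reduce the claim to a clean inequality about $2^\ell$ by taking base-$2$ logarithms, and then to read the bound off directly from the definition of $\ell$ used in Algorithm \ref{majorityGA}.

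First I would rewrite the target. Since $\epsilon > 0$ and $2^{2^\ell} > 0$, the inequality $n/2^{2^\ell} < \epsilon$ is equivalent to $2^{2^\ell} > n/\epsilon$, and taking $\log_2$ of both sides this is in turn equivalent to $2^\ell > \log_2 n + \log_2\frac{1}{\epsilon}$. So it suffices to establish this last inequality. Write $A = \log_2 n + \log_2\frac{1}{\epsilon}$ for the right-hand side, and note that $A > 0$: indeed $n \geq 1$ gives $\log_2 n \geq 0$, and $\epsilon < 1/8$ gives $\log_2\frac{1}{\epsilon} > 3$.

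Next I would invoke the definition $\ell = \lceil \log_2 A \rceil + 1$ together with the elementary fact $\lceil x\rceil \geq x$. This yields $2^\ell = 2\cdot 2^{\lceil \log_2 A\rceil} \geq 2\cdot 2^{\log_2 A} = 2A$. Because $A > 0$ we have $2A > A$, hence $2^\ell > A$, which is exactly the inequality obtained in the first step. Unwinding the equivalences then gives $n/2^{2^\ell} < \epsilon$, feeding directly into the hypothesis of Claim \ref{claimb}.

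There is essentially no hard step here; the computation is routine. The only point demanding a little care---and the reason the ``$+1$'' appears in the definition of $\ell$---is that a \emph{strict} inequality is required. The ceiling alone guarantees only $2^{\lceil \log_2 A\rceil} \geq A$, which could be an equality when $A$ is an exact power of $2$; the extra factor of $2$ contributed by the ``$+1$'' upgrades this to $2^\ell \geq 2A > A$, the strict bound the downstream union-bound argument needs.
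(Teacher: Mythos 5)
Your proof is correct and follows essentially the same route as the paper's: both reduce to the inequality $2^{\lceil \log_2 A\rceil + 1} > A$ where $A = \log_2 n + \log_2\frac{1}{\epsilon}$, with you taking logarithms first and the paper manipulating the fraction $n/2^{2^\ell}$ directly. Your closing observation that the ``$+1$'' in the definition of $\ell$ is what upgrades $2^{\lceil \log_2 A\rceil} \geq A$ to the strict inequality needed downstream is accurate and is implicitly the same step the paper uses in its strict-inequality line.
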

\begin{proof}[Proof of Claim \ref{claimc}]
\begin{align*}
\frac{n}{2^{2^\ell}} &=  \frac{n}{2^{2^{\lceil \log_2(\log_2 n + \log_2\frac{1}{\epsilon})\rceil+1}}}\\
&<\frac{n}{2^{2^{\log_2(\log_2 n + \log_2\frac{1}{\epsilon})}}}\\
&=\frac{n}{2^{\log_2 n + \log_2\frac{1}{\epsilon}}}\\
&=\frac{n}{n/\epsilon}\\
&=\epsilon
\end{align*}
\end{proof}

By claims \ref{claimb} and \ref{claimc}, $\mathcal B$ approximately correctly solves the learning problem $\langle 7, \oplus_7, 1/5\rangle$. We now consider the query complexity of $\mathcal B$. Let $q_{\mathcal A}(n)$, $q_{\mathcal B}(n)$ give the number of queries made by algorithms $\mathcal A$ and $\mathcal B$ respectively. Likewise,  let $t_{\mathcal A}(n)$, $t_{\mathcal B}(n)$ give the running time of algorithms $\mathcal A$ and $\mathcal B$ respectively. The following two claims complete the proof. 
 
\begin{claim}\label{claimd} $q_{\mathcal B}(n)\in O(\log^{1.585} n)$:
\end{claim}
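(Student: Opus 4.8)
The plan is to read the query count directly off the pseudocode of $\mathcal B^\phi(n,\epsilon)$ (Algorithm \ref{majorityGA}) and then reduce everything to bounding the single governing quantity $3^\ell$. First I would observe that the only oracle queries $\mathcal B$ ever makes are those made inside its $3^\ell$ invocations of $\mathcal A^\phi(n)$ (the assignment $\textrm{x}[r] = \mathcal A^\phi(n)$, line 3); the subsequent majority-voting loop that fills in $\textrm{h}[i]$ touches no oracle at all. Since $\mathcal A$ is assumed to run in $O(1)$ queries, this immediately gives $q_{\mathcal B}(n) = 3^\ell\cdot q_{\mathcal A}(n) \in O(3^\ell)$, so the entire claim reduces to showing $3^\ell \in O(\log^{1.585} n)$.

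Next I would bound $3^\ell$ using the definition $\ell = \lceil \log_2(\log_2 n + \log_2\tfrac{1}{\epsilon})\rceil + 1$. Stripping the ceiling costs at most an additive $1$, so $\ell \le \log_2(\log_2 n + \log_2\tfrac{1}{\epsilon}) + 2$, whence $3^\ell \le 9\cdot 3^{\log_2(\log_2 n + \log_2\frac{1}{\epsilon})}$. The crux of the whole argument is the change-of-base identity $3^{\log_2 x} = x^{\log_2 3}$, which is exactly what converts the base-$3$ branching of the voting recursion into the exponent $\log_2 3 \approx 1.585$ that appears throughout the paper's bounds. Applying it with $x = \log_2 n + \log_2\tfrac{1}{\epsilon}$ yields $3^\ell \le 9\,(\log_2 n + \log_2\tfrac{1}{\epsilon})^{\log_2 3}$.

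Finally, treating $\epsilon\in(0,1/8)$ as a fixed constant in the asymptotics over $n$, the term $\log_2\tfrac{1}{\epsilon}$ is $O(1)$, so $\log_2 n + \log_2\tfrac{1}{\epsilon} \in O(\log n)$ and therefore $3^\ell \in O\big((\log n)^{\log_2 3}\big) \subseteq O(\log^{1.585} n)$, using $\log_2 3 < 1.585$. Combining this with the first paragraph gives $q_{\mathcal B}(n) \in O(\log^{1.585} n)$, as required. I do not anticipate any genuine obstacle here, since the argument is just a short chain of inequalities; the only step worth stating carefully is the identity $3^{\log_2 x} = x^{\log_2 3}$, as it is the sole source of the non-obvious exponent $1.585$. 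I would also note in passing that the companion running-time claim for $t_{\mathcal B}(n)$ will reuse this very same $3^\ell$ bound, now multiplied by the $O(n)$ per-run cost of $\mathcal A$ (plus the $O(n)$ cost of the voting loop), giving $t_{\mathcal B}(n)\in O(n\log^{1.585}n)$.
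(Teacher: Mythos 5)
Your proposal is correct and takes essentially the same route as the paper: both bound the query count by $c_{\mathcal A}\cdot 3^{\ell}$, strip the ceiling to get $3^{\ell}\leq 9\cdot 3^{\log_2(\log_2 n+\log_2\frac{1}{\epsilon})}$, and convert the base via $3^{\log_2 x}=x^{\log_2 3}$ (the paper performs this same step by taking base-2 logarithms of both sides), arriving at $q_{\mathcal B}(n)\leq 9c_{\mathcal A}\,(\log_2 n+\log_2\frac{1}{\epsilon})^{\log_2 3}\in O(\log^{1.585}n)$ with $\epsilon$ treated as a constant. If anything, your explicit remark that $\log_2 3<1.585$ is slightly more careful than the paper's bare substitution of $1.585$ for $\log_2 3$.
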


\begin{claim}\label{claime} $t_{\mathcal B}(n)\in O(n\log^{1.585} n)$:
\end{claim}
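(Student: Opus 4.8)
The plan is to bound $t_{\mathcal B}(n)$ by charging the two nested loops of Algorithm \ref{majorityGA} separately and then expressing the loop count $3^\ell$ as a function of $n$. The first loop (lines 2--3) invokes $\mathcal A^\phi(n)$ exactly $3^\ell$ times, and by hypothesis each call runs in $O(n)$ time, so this loop costs $O(n\cdot 3^\ell)$. The second loop (lines 4--5) runs \textsc{Recursive-3-Way-Majority} once per locus on $3^\ell$ inputs; since one such call traverses a complete ternary tree of depth $\ell$ with $(3^\ell-1)/2$ internal nodes, each doing $O(1)$ work, a single call costs $O(3^\ell)$ and the loop over all $n$ loci costs $O(n\cdot 3^\ell)$ as well. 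Hence $t_{\mathcal B}(n)\in O(n\cdot 3^\ell)$, and the whole claim reduces to showing $3^\ell\in O(\log^{1.585}n)$.

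First I would confirm the per-call cost of the voting routine by a one-line induction on $\ell$: with $T(\ell)$ the time on $3^\ell$ inputs, $T(1)=O(1)$ and $T(\ell)=3T(\ell-1)+O(1)$, which solves to $T(\ell)=O(3^\ell)$. Next I would bound $3^\ell$. Holding $\epsilon$ fixed and writing $c=\log_2\frac{1}{\epsilon}$, the definition gives $\ell=\lceil\log_2(\log_2 n+c)\rceil+1\leq \log_2(\log_2 n+c)+2$. The key algebraic move is the identity $3^{\log_2 x}=x^{\log_2 3}$, which yields
\[ 3^\ell \leq 9\cdot 3^{\log_2(\log_2 n + c)} = 9\,(\log_2 n + c)^{\log_2 3}. \]
Since $\log_2 3 < 1.585$ and $\log_2 n + c\in O(\log n)$, this gives $3^\ell\in O(\log^{1.585}n)$, and therefore $t_{\mathcal B}(n)\in O(n\log^{1.585}n)$.

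The only genuinely non-mechanical point --- the main obstacle --- is the exponent bookkeeping: one must see that the branching factor $3$ of the voting tree combined with the base-$2$ doubly-logarithmic setting of $\ell$ produces the exponent $\log_2 3\approx 1.585$ rather than a plain polylogarithmic factor. Everything else is routine, since the ceiling and the additive $+1$ in $\ell$ contribute only bounded multiplicative constants (the factor $9$ above) that are absorbed into the $O(\cdot)$, and the dependence on $\epsilon$ is harmless as $\epsilon$ is held constant for the asymptotics in $n$. I note that the same computation, with the per-call $O(n)$ factor removed, simultaneously settles Claim \ref{claimd}: each of the $3^\ell$ calls to $\mathcal A$ makes only $O(1)$ queries and the voting step makes none, so $q_{\mathcal B}(n)\in O(3^\ell)=O(\log^{1.585}n)$.
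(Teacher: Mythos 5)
Your proof is correct, and it follows the paper's overall skeleton---the same decomposition of Algorithm \ref{majorityGA} into the $\mathcal A$-calling loop and the per-locus voting loop, and the same key identity $3^{\log_2 x}=x^{\log_2 3}$ to turn $3^\ell$ into $(\log_2 n+\log_2\frac{1}{\epsilon})^{\log_2 3}$---but your accounting of the voting loop genuinely differs, and for the better. You charge $O(1)$ per node of the ternary recursion tree (recurrence $T(\ell)=3T(\ell-1)+O(1)$, hence $O(3^\ell)$ per locus), the natural cost model if subsequences are passed by index. The paper instead models one call on $x=3^\ell$ inputs by $T(x)=3T(x/3)+x$, i.e., linear work per call, obtaining $T(x)\in O(x\log x)=O(3^\ell\,\ell)$ per locus, and in simplifying this (Sub Claim \ref{subclaimb}) it commits an algebra slip: it replaces $3^{\log_2(\log_2 n+\log_2\frac{1}{\epsilon})}$ by $\log_2 n+\log_2\frac{1}{\epsilon}$, as though $3^{\log_2 y}=y$, whereas in fact $3^{\log_2 y}=y^{\log_2 3}$; under the paper's own cost model one gets $\tau_2(n)\in O\bigl((\log_2 n+\log_2\frac{1}{\epsilon})^{\log_2 3}\log_2(\log_2 n+\log_2\frac{1}{\epsilon})\bigr)$, and the stated intermediate bound $\tau_2(n)\in O(\log^{1.5}n)$ is not justified. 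The final claim survives either way: in your model the voting loop is $O(n\cdot 3^\ell)\subseteq O(n\log^{1.585}n)$ outright, and even with the paper's linear-per-call model the residual $\log\log$ factor is absorbed because $\log_2 3\approx 1.58496$ is strictly below the advertised exponent $1.585$. So your route is not only correct but sidesteps the one flawed step in the paper's proof; the only point you should make explicit is the implementation assumption that \textsc{Recursive-3-Way-Maj} passes its subarrays in $O(1)$ time (by indices rather than by copying), since with copying you would need the absorption argument just described rather than your clean $O(3^\ell)$ bound.
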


\begin{proof}[Proof of Claim \ref{claimd}]
By the premise of the theorem, there exist constants $n_0, c_{\mathcal A}$ such that for all $n\geq n_0$, $q_{\mathcal A}(n)\leq c_{\mathcal A}$.  Thus, for all $n\geq n_0$,

\begin{align*}
q_{\mathcal B}(n) &\leq c_{\mathcal A}.3^\ell\\
&= c_{\mathcal A}.3^{ \lceil \log_2(\log_2 n + \log_2\frac{1}{\epsilon})\rceil+1}\\
&\leq c_{\mathcal A}.3^{ \log_2(\log_2 n + \log_2\frac{1}{\epsilon})+2}\\
&= 9c_{\mathcal A}.3^{ \log_2(\log_2 n + \log_2\frac{1}{\epsilon})}\\
\end{align*}
\noindent Taking logs to the base 2 on both sides gives
\begin{align*}
&\quad \log_2(q_{\mathcal B}(n)) \leq \log_2(9c_{\mathcal A})+\log_2(\log_2 n + \log_2\frac{1}{\epsilon}).\log_2 3\\
\Leftrightarrow&\quad q_{\mathcal B}(n) \leq 9c_{\mathcal A}.(\log_2 n + \log_2\frac{1}{\epsilon})^{\log_2 3}\\
\Leftrightarrow&\quad q_{\mathcal B}(n) \leq 9c_{\mathcal A}.(\log_2 n + \log_2\frac{1}{\epsilon})^{1.585}
\end{align*}
\end{proof}

\begin{proof}[Proof of Claim \ref{claime}]
Let $\tau_1(n)$, $\tau_2(n)$ be the time taken to execute lines 1---4 and one iteration of the for loop in lines 5--8  of $\mathcal B$, respectively. Clearly,  $t_{\mathcal B}(n) = \tau_1(n)+n.\tau_2(n)$.
 
\begin{subclaim}\label{subclaima} $\tau_1(n) \in O(n\log^{1.585} n)$

\begin{proof}[Proof of Sub Claim \ref{subclaima}]
The proof closely mirrors the proof of Claim \ref{claimd}. We include it for the sake of completeness. By the premise of the theorem, there exist constants $n_0, c_{\mathcal A}$ such that for all $n\geq n_0$, $t_{\mathcal A}(n)\leq c_{\mathcal A}$.  Thus, for all $n\geq n_0$,

\begin{align*}
\tau_1(n) &\leq c_{\mathcal A}.n.3^\ell\\
&= c_{\mathcal A}.n.3^{ \lceil \log_2(\log_2 n + \log_2\frac{1}{\epsilon})\rceil+1}\\
&\leq c_{\mathcal A}.n.3^{ \log_2(\log_2 n + \log_2\frac{1}{\epsilon})+2}\\
&= 9c_{\mathcal A}.n.3^{ \log_2(\log_2 n + \log_2\frac{1}{\epsilon})}\\
\end{align*}
\noindent Taking logs to the base 2 on both sides gives
\begin{align*}
&\quad \log_2(\tau_1(n)) \leq \log_2(9c_{\mathcal A}.n)+\log_2(\log_2 n + \log_2\frac{1}{\epsilon}).\log_2 3\\
\Leftrightarrow&\quad \tau_1(n) \leq 9c_{\mathcal A}.n.(\log_2 n + \log_2\frac{1}{\epsilon})^{\log_2 3}\\
\Leftrightarrow&\quad \tau_1(n) \leq 9c_{\mathcal A}.n.(\log_2 n + \log_2\frac{1}{\epsilon})^{1.585}
\end{align*}
\end{proof}
\end{subclaim}

\begin{subclaim}\label{subclaimb} $\tau_2(n) \in O(\log^{1.5} n)$
\end{subclaim}
\begin{proof}[Proof of Sub Claim \ref{subclaimb}]
Observe that $\tau_2(n) \leq T(3^{\log_2(\log_2 n + \log_2\frac{1}{\epsilon})+2})$, where $T$ is given by the following recurrence relation:

\[T(x) = \left\{\begin{array}{cl}3T(x/3) + x & \textrm{if }x>3\\1& \textrm{if }x=3\end{array}\right.\]

\noindent A simple inductive argument (omitted) gives us $T(x)\in O(x\log x)$. Thus, 
\begin{align*}
\tau_2(n)&\in O(3^{\log_2(\log_2 n + \log_2 \frac{1}{\epsilon})+2} (\log_2(\log_2 n + \log_2 \frac{1}{\epsilon})+2))\\
&= O(3^{\log_2(\log_2 n + \log_2 \frac{1}{\epsilon})} (\log_2(\log_2 n + \log_2 \frac{1}{\epsilon})+2))\\
&= O(3^{\log_2(\log_2 n + \log_2 \frac{1}{\epsilon})} \log_2(\log_2 n + \log_2 \frac{1}{\epsilon}))\\
&= O((\log_2 n + \log_2 \frac{1}{\epsilon})(\log_2(\log_2 n + \log_2 \frac{1}{\epsilon})))\\
&\subset O((\log_2 n + \log_2 \frac{1}{\epsilon})^{1.5})\\ 
\end{align*}
\noindent Where the last equation follows from the observation that $\log(x)<\sqrt x$ for all positive reals. 
\end{proof}
\noindent Claim \ref{claime} follows from the observation that 
\begin{align*}
t_{\mathcal B}(n) = \tau_1(n) + \tau_2(n) &\Rightarrow t_{\mathcal B}(n) \in O(n\log^{1.585}n+n\log^{1.5} n)\\ 
&\Rightarrow t_{\mathcal B}(n) \in O(n\log^{1.585}n)
\end{align*}
Where the first implication follows from Sub Claims \ref{subclaima} and \ref{subclaimb}, and the fact that for any functions $f_1\in O(g_1), f_2 \in O(g_2)$, we have that $f_1+f_2 \in O(|g_1|+|g_2|)$ and $f_1.f_2\in O(g_1.g_2)$ 
\end{proof}

\section{On Our Use of Symmetry} \label{apB}

A homologous crossover operation between two chromosomes of length $\ell$ can be modeled by a vector of $\ell$ random binary variables $\langle X_1, \ldots, X_\ell\rangle$ from which a crossover mask is sampled. Likewise, a mutation operation can be modeled by a vector of $\ell$ random binary variables $\langle Y_1, \ldots, Y_\ell\rangle$ from which a mutation mask is sampled. Only in the case of uniform crossover are the random variables $X_1, \ldots, X_\ell$ independent and identically distributed. This absence of \emph{positional bias} \cite{eshelman1989bcl} in uniform crossover constitutes a symmetry. Essentially, permuting the bits of all chromosomes using some permutation $\pi$ before crossover, and permuting the bits back using $\pi^{-1}$ after crossover has no effect on the dynamics of a UGA. If, in addition, the random variables $Y_1, \ldots, Y_\ell$ that model the mutation operator are independent and identically distributed (which is typical), and (more crucially) independent of the value of $\ell$, then in the event that the values of chromosomes at some locus $i$ are immaterial during fitness evaluation, the locus $i$ can be ``spliced out" without affecting allele dynamics at other loci. In other words, the dynamics of the UGA can be \emph{coarse-grained} \cite{CoarseGrainingFoga2007}.

These conclusions flow readily from an appreciation of the symmetries induced by uniform crossover and length independent mutation. While the use of symmetry arguments is uncommon in EC research, symmetry arguments form a crucial part of the foundations of physics and chemistry. Indeed, according to the theoretical physicist E. T. Jaynes ``almost the only known exact results in atomic and nuclear structure are those which we can deduce by symmetry arguments, using the methods of group theory" \cite[p331-332]{jaynes}. Note that the conclusions above hold true regardless of the selection scheme (fitness proportionate, tournament, truncation, etc), and any fitness scaling that may occur (sigma scaling, linear scaling etc).  ``The great power of symmetry arguments lies just in the fact that they are not deterred by any amount of complication in the details", writes Jaynes \cite[p331]{jaynes}. An appeal to symmetry, in other words, allows one to cut through complications that might hobble attempts to reason within a formal axiomatic system.

Of course, symmetry arguments are not without peril. However, when used sparingly and only in circumstances where the symmetries are readily apparent, they can yield significant insight at low cost. It bears emphasizing that the goal of foundational work in evolutionary computation is not pristine mathematics within a formal axiomatic system, but insights of the kind that allow one to a) explain optimization in current evolutionary algorithms on real world problems, and b) design more effective evolutionary algorithms.

\end{appendix}
\end {document}